\theoremstyle{plain}
\newtheorem{theorem}{Theorem}[]
\newtheorem{proposition}[theorem]{Proposition}
\newtheorem*{proposition*}{Proposition}
\theoremstyle{definition}
\newtheorem*{definition*}{Definition}
\theoremstyle{remark}
\newcommand{\KL}{D_{\mathrm{KL}}}
\newcommand{\abs}[1]{\left|#1\right|}
\newcommand{\XXX}{\mathcal{X}} 
\newcommand{\TTT}{\mathcal{T}} 
\newcommand{\SSS}{\mathcal{S}} 
\newcommand{\AAA}{\mathcal{A}} 
\newcommand{\RR}{\mathbb{R}} 
\newcommand{\EE}{\mathbb{E}}
\newcommand{\II}{\mathbb{I}}
\newcommand{\LLL}{\mathcal{L}}
\newcommand{\LFM}{\LLL_{\rm FM}}
\newcommand{\LDB}{\LLL_{\rm DB}}
\newcommand{\LTB}{\LLL_{\rm TB}}
\newcommand{\ra}{{\rightarrow}}
\renewcommand{\paragraph}
{
  \@startsection{paragraph}{4}{\z@}%
                {0.0ex}%
                {-0.5em}%
        {\normalsize\bf\hspace{-0em}}
}
\title{Trajectory balance:\\Improved credit assignment in GFlowNets}
\author{%
  Nikolay Malkin\\
  Mila, Universit\'e de Montr\'eal\\
  Montr\'eal, Qu\'ebec, Canada\\
  \And
   Moksh Jain \\
  Mila, Universit\'e de Montr\'eal\\
  Montr\'eal, Qu\'ebec, Canada
  \And
  Emmanuel Bengio \\
  Mila, McGill University, Recursion\\
  Montr\'eal, Qu\'ebec, Canada
  \And
     Chen Sun \\
  Mila, Universit\'e de Montr\'eal\\
  Montr\'eal, Qu\'ebec, Canada
  \And
     Yoshua Bengio \\
  Mila, Universit\'e de Montr\'eal\\
  Montr\'eal, Qu\'ebec, Canada\\
  \small\texttt{\{nikolay.malkin,moksh.jain,chen.sun,yoshua.bengio\}@mila.quebec}\\\small\texttt{emmanuel.bengio@recursionpharma.com}
}
\begin{document}

\maketitle



\begin{abstract}
Generative flow networks (GFlowNets) are a method for learning a stochastic policy for generating compositional objects, such as graphs or strings, from a given unnormalized density by sequences of actions, where many possible action sequences may lead to the same object. We find previously proposed learning objectives for GFlowNets, \emph{flow matching} and \emph{detailed balance}, which are analogous to temporal difference learning, to be prone to inefficient credit propagation across long action sequences. 
We thus propose a new learning objective for GFlowNets, \emph{trajectory balance}, as a more efficient alternative to previously used objectives. We prove that any global minimizer of the trajectory balance objective can define a policy that samples exactly from the target distribution. In experiments on four distinct domains, we empirically demonstrate the benefits of the trajectory balance objective  for GFlowNet convergence, diversity of generated samples, and robustness to long action sequences and large action spaces.  
\end{abstract}

\section{Introduction}

Generative flow networks \citep[GFlowNets;][]{bengio2021flow,bengio2021foundations} are models that exploit generalizable structure in an energy function ${\cal E}$ to amortize sampling from the corresponding probability density function on a space of compositional objects $\XXX$, for example, graphs composed of nodes and edges. A GFlowNet learns a stochastic policy that generates such structured objects by producing a stochastic sequence of \emph{actions} that incrementally modify a partial object (\emph{state}), e.g., by adding an edge or a node to a graph, starting from a universal initial state (like an empty graph). A special `exit' action signals that the construction of the object $x\in\XXX$ is completed, and the policy is trained so as to make the likelihood of generating $x$ proportional to the given unnormalized probability or reward $R(x)=e^{-{\cal E}(x)}$.

Like other models in deep reinforcement learning \citep[RL;][]{sutton2018reinforcement}, GFlowNets are trained with a parametric policy that can be given desired inductive biases (e.g., a particular deep net architecture) and allows generalization to states not seen in training. Natural domains for applying GFlowNets are those where exact sampling is intractable and local exploration (MCMC) methods perform poorly, but diversity of samples is desired \cite{bengio2021flow,zhang2022generative,jain2022biological,deleu2022bayesian}. For example, GFlowNets have been used \citep{bengio2021flow} to generate graphical descriptions of molecules by incremental addition of simple building blocks, where the reward $R(x)$ is the estimated strength of binding the constructed molecule to a protein target: the number of candidates grows rapidly with the molecule size and the reward has many separated modes. Like all RL models that iteratively sample action sequences for training, GFlowNets pose the learning challenges of exploration/exploitation and credit assignment, i.e., propagation of a reward signal over an action sequence \citep{van2018deep,bengio2020interference,kumar2021implicit}. \emph{The efficiency of credit assignment and training in GFlowNets is the focus of the present paper.}

The learning problem solved by GFlowNets also has two fundamental differences with the standard reward-maximization paradigm of RL. First, a GFlowNet aims to make the likelihood of reaching a terminating state proportional to the reward, not to concentrate it at a maximal-reward state. Thus, a GFlowNet must model the diversity in the target distribution, not only its dominant mode. Reward maximization in RL can be turned into sampling proportionally to the reward with appropriate entropy maximization regularization, if there is only one way to reach every state~\citep{bengio2021flow}. The second difference with reward-maximization in RL is indeed that the GFlowNet training objectives still lead to correct sampling even when multiple action sequences lead to the same terminating state. Note that the likelihood of reaching a state is the sum of likelihoods of all action sequences leading to it, and that the number of such paths may be exponential in their length.

The set of all achievable sequences of actions and states can be conceptually organized in a  directed graph $G=(\SSS,\AAA)$ in which the vertices $\SSS$ are states (some of them designated as terminal states, in bijection with $\XXX$) and the edges $u{\ra}v$ in $\AAA$ each correspond to applying an action while in a state $u\in \SSS$ and landing in state $v$. In \cite{bengio2021flow}, a GFlowNet is described by a nonnegative function on the edges, called the {\em edge flow} $F:\AAA\to\RR_{\geq0}$, where $F(u\ra v)$ is an \emph{unnormalized} likelihood of taking the action that modifies state $u$ to state $v$. The {\em GFlowNet policy} samples the transition $u\ra v$ from state $u$ with probability $F(u\ra v)/\sum_{v'} F(u\ra v')$ where the denominator sums over the outgoing edges from $u$.  By analogy with the classical notion of flows in networks \citep{ford-fulkerson}, one can think of this flow like the amount of water flowing through an edge (like a pipe) or a state (like a tee, where pipes meet).  It is shown that this GFlowNet policy samples $x$ proportionally to $R(x)$ if $F$ satisfies a set of linear \emph{flow matching} constraints (a conservation law: the sum of flows into a state should equal the sum of flows out of it). These constraints are converted into a temporal difference-like objective that can be optimized with respect to the parameters of a neural net that approximates $F$. An alternative objective based on \emph{detailed balance} constraints was proposed in \cite{bengio2021foundations}. These objectives, however, like temporal-difference learning, can suffer from slow credit assignment \cite{van2018deep,bengio2020interference, kumar2021implicit}.%

The \textbf{main contribution} of this work (\S\ref{sec:traj_balance}) is a new parametrization and objective for GFlowNets. This objective, which we call \emph{trajectory balance}, is computed on sampled full action sequences (trajectories) from the initial state to a terminal state, unlike the flow matching and detailed balance objectives. We prove that global minimization of trajectory balance implies that the learned action policy samples proportionally to $R$.
We also empirically show that {\em the trajectory balance objective accelerates training convergence} relative to previously proposed objectives, improves the learned sampling policy with respect to metrics of diversity and divergence from the reward function, and allows learning GFlowNets that generate sequences far longer than was possible before. 
As a \textbf{secondary contribution}, we perform the first empirical validation of the detailed balance training objective.
Comparative evaluation of the three GFlowNet objectives and non-GFlowNet baselines is performed on four domains illustrating different features of the reward landscape:
\begin{itemize}[leftmargin=*,topsep=0pt,parsep=0pt,itemsep=1mm]
    \item \textbf{Hypergrid} (\S\ref{sec:experiments_hypergrid}), an illustrative synthetic environment with modes separated by wide troughs;
    \item \textbf{Molecule synthesis} (\S\ref{sec:experiments_molecule}), a practical graph generation problem, where the trajectory balance objective leads to significant computational speed-ups and more diverse generated candidates;
    \item \textbf{Sequence generation} (\S\ref{sec:experiments_sequence}), where we show the robustness of trajectory balance to large action spaces and long action sequences on synthetic data and real AMP sequence data.
\end{itemize}
Since the initial appearance of this work on arXiv, several published papers and preprints have used trajectory balance and its generalizations successfully in various applications \citep{zhang2022generative,jain2022biological,deleu2022bayesian,subtb}.

\section{Preliminaries}
\label{sec:preliminaries}

\subsection{Markovian flows}

We give some essential definitions, following \S2 of \cite{bengio2021foundations}.
Fix a directed acyclic graph $G=(\SSS,\AAA)$ with state space $\SSS$ and action space $\AAA$. 
Let $s_0\in \SSS$ be the special \emph{initial (source) state}, the only state with no incoming edges, and designate vertices with no outgoing edges as \emph{terminal} (sinks) \footnote{\cite{bengio2021flow} allowed terminal states with outgoing edges. The difference is easily overcome by augmenting every such state $x$ by a new terminal state $x^\top$ with a stop action $x\ra x^\top$.}. We call the vertices \emph{states}, the edges \emph{actions}, the states reachable through outgoing edges from a state its \emph{children}, and the sources of its incoming edges its \emph{parents}.

A \emph{complete trajectory} is a sequence of transitions \mbox{$\tau=(s_0\ra s_1\ra\dots\ra s_n)$} going from the initial state $s_0$ to a terminal state $s_n$ with $(s_t\ra s_{t+1})\in \AAA$ for all $t$. Let $\TTT$ be the set of complete trajectories. A \emph{trajectory flow} is a nonnegative function $F:\TTT\ra\RR_{\geq0}$. With our water analogy, it could be the number of water molecules travelling along this path (the units don't matter because the flow function can be scaled arbitrarily, since we normalize them to get probabilities).
For any state $s$, define the state flow $F(s)=\sum_{s\in\tau}F(\tau)$, and, for any edge $s\ra s'$, the edge flow 
\begin{equation}
F(s\ra s')=\sum_{\tau=(\dots\ra s\ra s'\ra\dots)}F(\tau).
\end{equation} 
As a consequence of this definition, the \emph{flow matching} constraint (incoming flow = outgoing flow) is satisfied for all states $s$ that are not initial or terminal:
\begin{equation}
F(s)=\sum_{(s''\ra s)\in \AAA}F(s''\ra s)=\sum_{(s\ra s')\in \AAA}F(s\ra s').
\label{eqn:fm_constraint}
\end{equation}
A nontrivial (i.e., not identically zero) trajectory flow $F$ determines a distribution $P$ over trajectories, 
\begin{equation}
    P(\tau)=\frac{1}{Z}F(\tau),\quad Z=F(s_0)=\sum_{\tau\in\TTT}F(\tau).
    \label{eqn:flow_distribution}
\end{equation}
The trajectory flow $F$ is \emph{Markovian} if there exist action distributions $P_F(-|s)$ over the children of each nonterminal state $s$ such that the distribution $P$ has a factorization
\begin{equation}
    P(\tau=(s_0\ra \dots\ra s_n))=\prod_{t=1}^nP_F(s_t|s_{t-1}).
    \label{eqn:flow_pf}
\end{equation}
Equivalently (\cite{bengio2021foundations}, Prop.\ 3) there are distributions $P_B(-|s)$ over the parents of each noninitial state $s$ such that for any terminal $x$,
\begin{equation}
    P(\tau=(s_0\ra \dots\ra s_n)|s_n=x)=\prod_{t=1}^n P_B(s_{t-1}|s_t).
    \label{eqn:flow_pb}
\end{equation}
If $F$ is a Markovian flow, then $P_F$ and $P_B$ can be computed in terms of state and edge flows:
\begin{equation}
    P_F(s'|s)=\frac{F(s\ra s')}{F(s)},\quad P_B(s|s')=\frac{F(s\ra s')}{F(s')},
    \label{eqn:flow_to_policy}
\end{equation}
supposing denominators do not vanish. We call $P_F$ and $P_B$ the \emph{forward policy} and \emph{backward policy} corresponding to $F$, respectively. 
These relations are summarized by the \emph{detailed balance} constraint
\begin{equation}
    F(s)P_F(s'|s)=F(s')P_B(s|s').
    \label{eqn:db_constraint}
\end{equation}
\paragraph{Uniqueness properties.} A Markovian flow is uniquely determined by an edge flow, i.e., a nontrivial choice of nonnegative value on every edge satisfying the flow matching constraint (\ref{eqn:fm_constraint}). By Corollary 1 of \cite{bengio2021foundations}, a Markovian flow is also uniquely determined by either of
\begin{itemize}[leftmargin=*,topsep=-1pt,parsep=0pt,itemsep=1mm]
    \item a constant $Z=F(s_0)>0$ and a distribution $P_F(-|s)$ over children of every nonterminal state; or
    \item a nontrivial choice of nonnegative state flows $F(x)$ for every terminal state $x$ and a choice of distribution $P_B(-|s)$ over parents of every noninitial state.
\end{itemize}

\subsection{GFlowNets}

Suppose that a nontrivial nonnegative reward function \mbox{$R:\XXX\to\RR_{\geq0}$} is given on the set of terminal states. GFlowNets \citep{bengio2021flow} aim to approximate a Markovian flow $F$ on $G$ such that 
\begin{equation}
F(x)=R(x)\quad\forall x\in\XXX.
\label{eqn:reward_matching}
\end{equation}
We adopt the broad definition that a GFlowNet is any learning algorithm consisting of:
\begin{itemize}[leftmargin=*,topsep=-1pt,parsep=0pt,itemsep=1mm]
\item a model capable of providing the initial state flow $Z=F(s_0)$ as well as the forward action distributions $P_F(-|s)$ for any nonterminal state $s$ (and therefore, by the above, uniquely but possibly in an implicit way determining a Markovian flow $F$);
\item an objective function, such that if the model is capable of expressing any action distribution and the objective function is globally minimized, then the constraint (\ref{eqn:reward_matching}) is satisfied for the corresponding Markovian flow $F$.
\end{itemize}
The forward policy of a GFlowNet can be used to sample trajectories from the corresponding Markovian flow $F$ by iteratively taking actions according to policy $P_F(-|s)$. 
If the objective function is globally minimized, then the likelihood of terminating at $x$ is proportional to $R(x)$.

In general, an objective optimizing for (\ref{eqn:reward_matching}) cannot be minimized directly because $F(x)$ is a sum over all trajectories leading to $x$, and computing it may not be practical. Therefore, two local objectives -- \emph{flow matching} and \emph{detailed balance} -- have previously been proposed.


\paragraph{Flow matching objective \citep{bengio2021flow}.} A model $F_\theta(s,s')$ 
\footnote{In practice, it is convenient and more economical to provide a representation of $s$ to the neural net, which simultaneously outputs the flows $F_\theta(s,s')$ for all $s'$ that are reachable by an action from $s$.}
with learnable parameters $\theta$ approximates the edge flows $F(s\ra s')$. The corresponding forward policy is given by $P_F(s'|s;\theta)\propto F_\theta(s,s')$ (Eq. (\ref{eqn:flow_to_policy})). Denote the corresponding Markovian flow by $F_\theta$ and distribution over trajectories by $P_\theta$.
The parameters are trained to minimize the error in the flow matching constraint (\ref{eqn:fm_constraint}) for all noninitial and nonterminal nodes $s$:
\begin{equation}
        \LFM(s)=\left(\log\frac{\sum_{(s''\ra s)\in \AAA}F_\theta(s'',s)}{\sum_{(s\ra s')\in \AAA}F_\theta(s,s')}\right)^2
    \label{eqn:fm_objective_node}
\end{equation}
and a similar objective $\LFM'$ pushing the inflow at $x\in\XXX$ to equal $R(x)$ at terminal nodes $x$. This objective is optimized for nonterminal states $s$ and terminal states $x$ from trajectories sampled from a training policy $\pi_\theta$. Usually, $\pi_\theta$ is chosen to be a tempered (higher temperature) version of $P_F(-|s,\theta)$, which also helps exploration during training. The parameters are updated with stochastic gradient 
\begin{equation}
    \EE_{\tau=(s_0\ra\dots\ra s_n)\sim \pi_\theta}\nabla_\theta\hspace{-1mm}\left[\sum_{t=1}^{n-1}\LFM(s_t)+\LFM'(s_n)\right]\hspace{-1mm}.
\end{equation}
As per Proposition 10 of \cite{bengio2021foundations}, if the training policy $\pi_\theta$ has full support, and a global minimum of the expected loss (\ref{eqn:fm_objective_node}) over states on trajectories sampled from $\pi_\theta$ is reached, then the GFlowNet samples from the target distribution (i.e., $F_\theta$ satisfies (\ref{eqn:reward_matching})).


\paragraph{Detailed balance objective \citep{bengio2021foundations}.} A neural network model with parameters $\theta$ has input $s$ and three kinds of outputs: an estimated state flow $F_\theta(s)$, an estimated distribution over children $P_F(-|s;\theta)$, and an estimated distribution over parents $P_B(-|s;\theta)$. The policy $P_F(-|-;\theta)$ and the initial state flow $F_\theta(s_0)$ uniquely determine a Markovian flow $F_\theta$, which is not necessarily compatible with the estimated backward policy $P_B(-|-;\theta)$. The error in the detailed balance constraint (\ref{eqn:db_constraint}) is optimized on actions $(s\ra s')$ between nonterminal nodes seen along trajectories sampled from the training policy:
\begin{equation}
    \LDB(s,s')=\left(\log \frac{F_\theta(s)P_F(s'|s;\theta)}{F_\theta(s')P_B(s|s';\theta)}\right)^2,
    \label{eqn:db_objective_edge}
\end{equation}
and a similar constraint $\LDB'(s,x)$ is optimized at actions leading to terminal nodes. Similarly to flow matching, the parameters are updated with stochastic gradient
\begin{equation}
    \EE_{(s_0\ra\dots\ra s_n)\sim \pi_\theta}\nabla_\theta\left[\sum_{t=1}^{n-1}\LDB(s_{t-1},s_t)+\LDB'(s_{n-1},s_n)\right]
\end{equation}
along trajectories sampled from a training policy $\pi_\theta$. By Proposition 6 of \cite{bengio2021foundations}, a global minimum of the expected detailed balance loss under a $\pi_\theta$ with full support specifies a GFlowNet that samples from the target distribution, i.e., the flow $F_\theta$ satisfies (\ref{eqn:reward_matching}).


\paragraph{Remarks.}
In some problems, such as autoregressive sequence generation (\S\ref{sec:experiments_sequence}), the directed graph $G$ is a tree, so each state has only one parent. In this case, $P_B$ is trivial and the detailed balance objective reduces to the flow matching objective, which in turn can be shown to be equivalent to Soft Q-Learning \citep{haarnoja2017reinforcement,buesing2019approximate} with temperature $\alpha=1$, a uniform $q_{\mathbf{a}'}$, and $\gamma=1$.

\section{Trajectory balance}
\label{sec:traj_balance}

Let $F$ be a Markovian flow and $P$ the corresponding distribution over complete trajectories, defined by (\ref{eqn:flow_distribution}), and let $P_F$ and $P_B$ be forward and backward policies determined by $F$. A direct algebraic manipulation of Eqs.~(\ref{eqn:flow_distribution},\ref{eqn:flow_pf},\ref{eqn:flow_pb}) gives the \emph{trajectory balance constraint} for any complete trajectory $\tau=(s_0\ra s_1\ra\dots\ra s_n=x)$:
\begin{equation}
    Z\prod_{t=1}^nP_F(s_t|s_{t-1})=F(x)\prod_{t=1}^nP_B(s_{t-1}|s_t),
    \label{eqn:tb_constraint}
\end{equation}
where we have used that $P(s_n=x)=\frac{F(x)}{Z}$.

As explained in \S\ref{app:extensions}, the trajectory balance constraint (\ref{eqn:tb_constraint}) and the detailed balance constraint (\ref{eqn:db_constraint}) are special cases of one general constraint, which has been studied as a training objective in \cite{subtb}.

\paragraph{Trajectory balance as an objective.} We propose to convert (\ref{eqn:tb_constraint}) into an objective to be optimized along trajectories sampled from a training policy. Suppose that a model with parameters $\theta$ outputs estimated forward policy $P_F(-|s;\theta)$ and backward policy $P_B(-|s;\theta)$ for states $s$ (just as for detailed balance above), as well as a global scalar $Z_\theta$ estimating $F(s_0)$. The scalar $Z_\theta$ and forward policy $P_F(-|-;\theta)$ uniquely determine an implicit Markovian flow $F_\theta$.


\begin{algorithm}[t]
\begin{algorithmic}[1]
\INPUT Reward function $R:\XXX\to\RR_{>0}$, model and optimizer hyperparameters
\STATE Initialize models $P_F,P_B,Z$ with parameters $\theta$
\REPEAT
\STATE Sample trajectory $\tau=(s_0\ra\dots\to s_n)$ from policy $P_F(-|-;\theta)$ or a tempered version of it
\STATE $\theta\leftarrow\theta-\eta\nabla_\theta\LTB(\tau)$ \COMMENT{gradient update on (\ref{eqn:tb_objective_one})}
\UNTIL 
convergence monitoring on running $\LTB(\tau)$
\end{algorithmic}
\caption{Training a GFlowNet with trajectory balance}
\label{alg:traj_balance}
\end{algorithm}

For a trajectory $\tau=(s_0\ra s_1\ra\dots\ra s_n=x)$, define the \emph{trajectory loss}
\begin{equation}
    \LTB(\tau)=\left(\log\frac{Z_\theta\prod_{t=1}^nP_F(s_t|s_{t-1};\theta)}{R(x)\prod_{t=1}^nP_B(s_{t-1}|s_t;\theta)}\right)^2.
    \label{eqn:tb_objective_one}
\end{equation}
If $\pi_\theta$ is a training policy -- usually that given by $P_F(-|-;\theta)$ or a tempered version of it -- then the trajectory loss is updated along trajectories sampled from $\pi_\theta$, i.e., with stochastic gradient
\begin{equation}
    \EE_{\tau\sim\pi_\theta}\nabla_\theta\LTB(\tau).
    \label{eqn:tb_objective}
\end{equation}
The full algorithm, with batch size of 1, is presented as Algorithm~\ref{alg:traj_balance} and its 
correctness is guaranteed by the following.

\newcommand{\mainproposition}{
Let $R$ be a positive reward function on $\XXX$.
\begin{enumerate}[label=(\alph*), itemsep=0pt,parsep=0pt,topsep=0pt,leftmargin=*]
\item If $P_F(-|-;\theta)$, $P_B(-|-;\theta)$, and $Z_\theta$ are the forward and backward policies and normalizing constant of a Markovian flow $F$ satisfying (\ref{eqn:reward_matching}), then $\LTB(\tau)=0$ for all complete trajectories $\tau$.
\item Conversely, suppose that $\LTB(\tau)=0$ for all complete trajectories $\tau$. Then the corresponding Markovian flow $F_\theta$ satisfies (\ref{eqn:reward_matching}), and $P_F(-|-;\theta)$ samples proportionally to the reward.
\end{enumerate}
}
\begin{proposition}
\label{prop:main}
\mainproposition
\end{proposition}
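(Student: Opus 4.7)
The plan is to prove the two directions separately, each leveraging the fact recalled in the ``uniqueness properties'' paragraph that a Markovian flow is completely determined by $Z=F(s_0)$ together with the forward policy $P_F(-\mid s)$. Direction (a) is a direct algebraic rearrangement of the two factorizations of a Markovian flow's trajectory distribution, and direction (b) reduces to summing the trajectory-balance identity over all complete trajectories terminating at each $x\in\XXX$, followed by a probabilistic argument that backward walks from $x$ have total mass one.

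For part (a), I would start from the two factorizations guaranteed by the Markovian property, namely (\ref{eqn:flow_pf}) and (\ref{eqn:flow_pb}). Combined with $P(s_n=x)=F(x)/Z$, which follows from $P=F/Z$ together with the definition $F(x)=\sum_{\tau\ni x}F(\tau)$ specialized to a terminal $x$, and the reward-matching hypothesis $F(x)=R(x)$, these give $\prod_{t=1}^n P_F(s_t\mid s_{t-1})=(R(x)/Z)\prod_{t=1}^n P_B(s_{t-1}\mid s_t)$. Clearing $Z$, taking a logarithm, and squaring yields $\LTB(\tau)=0$.

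For part (b), I would observe that the implicit Markovian flow $F_\theta$ determined by $Z_\theta$ and $P_F(-\mid-;\theta)$ assigns trajectory mass $F_\theta(\tau)=Z_\theta\prod_{t=1}^n P_F(s_t\mid s_{t-1};\theta)$, and consequently, for any terminal $x$, $F_\theta(x)=\sum_{\tau:s_n=x}Z_\theta\prod_{t=1}^n P_F(s_t\mid s_{t-1};\theta)$. Applying the hypothesis $\LTB(\tau)=0$ termwise to substitute $Z_\theta\prod_t P_F=R(x)\prod_t P_B$, this becomes $F_\theta(x)=R(x)\sum_{\tau:s_n=x}\prod_{t=1}^n P_B(s_{t-1}\mid s_t;\theta)$. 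Once the inner sum is shown to equal $1$, (\ref{eqn:reward_matching}) follows, and the claim that $P_F(-\mid-;\theta)$ samples proportionally to $R$ reduces to the standard identity that under any Markovian flow the probability of terminating at $x$ is $F_\theta(x)/Z_\theta$.

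The main obstacle is the backward-sum identity, which is the only step that is not pure bookkeeping. My argument would be that $P_B(-\mid s';\theta)$ is a distribution over the parents of each noninitial $s'$, so iterating it from $x$ defines a Markov chain on $\SSS$ whose trajectories terminate almost surely at $s_0$, since $s_0$ is the unique source of the DAG and $G$ is acyclic. The complete trajectories ending at $x$ are in bijection with the paths of this chain from $x$ to $s_0$, and the product $\prod_{t=1}^n P_B(s_{t-1}\mid s_t;\theta)$ is precisely the chain's probability of the corresponding reversed path; summing over all such paths therefore gives $1$, which closes the argument.
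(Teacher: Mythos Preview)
Your proposal is correct and follows essentially the same route as the paper's own proof: part (a) is the same algebraic rearrangement of (\ref{eqn:flow_pf}) and (\ref{eqn:flow_pb}) with $F(x)=R(x)$ substituted, and part (b) likewise computes $F_\theta(x)=\sum_\tau Z_\theta\prod_tP_F=\sum_\tau R(x)\prod_tP_B=R(x)$, with the backward-sum identity justified by iterating the law of total probability (the paper phrases this via an auxiliary flow $F_\theta'$ determined by $R$ and $P_B$, whereas you phrase it as almost-sure absorption of the backward Markov chain at $s_0$, but these are the same argument).
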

The proof is given in \S\ref{app:proofs}. In particular, if $\pi_\theta$ has full support and $\EE_{\tau\sim\pi_\theta}\LTB(\tau)$ is globally minimized over all forward and backward policies $(P_F,P_B)$ and normalizing constants $Z$, then the corresponding Markovian flow $F_\theta$ satisfies (\ref{eqn:reward_matching}) and $P_F(-|-;\theta)$ samples proportionally to the reward. (The positivity assumption on $R$ is necessary to avoid division by 0 in (\ref{eqn:tb_objective_one}), but can be relaxed by introduction of smoothing constants, just as was done for the losses proposed in \cite{bengio2021flow,bengio2021foundations}.)

\paragraph{Remarks.} \textbf{(1)} 
As discussed in \S\ref{sec:preliminaries}, in the case of auto-regressive generation, $G$ is a directed tree, where each $s\in \mathcal{S}$ has a single parent state. In this case $P_B$ is trivially $P_B=1$, $\forall s \in \mathcal{S}$. We get 
\begin{equation}
    \LTB(\tau) = \left(\log \frac{Z_\theta\prod_{t=1}^n P_F(s_t|s_{t-1}; \theta)}{R(x)}\right)^2
\end{equation}
\textbf{(2)} We found it beneficial to parametrize $Z$ in the logarithmic domain ($\log Z$ is the trainable parameter) and output logits for $P_F(-|s;\theta)$ and $P_B(-|s;\theta)$, so that all products in (\ref{eqn:tb_objective_one}) become sums under the logarithm. This is consistent with the log-domain parametrization of flows in \cite{bengio2021flow}.
In addition, we found it helpful to set a higher learning rate for $Z$ than for the parameters of $P_F$ and $P_B$.\footnote{Because the loss (\ref{eqn:tb_objective_one}) is quadratic in $\log Z$, gradient updates on $\log Z$ are equivalent to setting it to a weighted moving average of the discrepancy between $\log\prod P_F$ and $\log(R(x)\prod P_B)$. Optimizers enhanced with momentum complicate things: we leave empirical investigation of these questions to future work.}

\subsection{Canonical choice of reward-matching flow}
\label{sec:canonical_solution}

The constraint (\ref{eqn:reward_matching}), in general, does not have a unique solution: if the underlying undirected graph of $G$ has cycles, there may be multiple Markovian flows whose corresponding action policies sample proportionally to the reward. However, by the uniqueness properties, for any choice of backward policy $P_B(-|-)$, there is a unique flow satisfying (\ref{eqn:reward_matching}), and thus a unique corresponding forward policy $P_F(-|-)$ for states with nonzero flow. (See Fig.~\ref{fig:grid_uniform_learned}.)

In some settings, it may be beneficial to \emph{fix} the backward policy $P_B$ and train only the parameters giving $P_F$ and $Z_\theta$. For example, it may difficult to construct a model that outputs a distribution over the parents of a given input state (e.g., for the molecule domain (\S\ref{sec:experiments_molecule}), it is hard to force invariance to molecule isomorphism). A natural choice is to set $P_B(-|s)$ to be uniform over all the parents of a state $s$, i.e., $P_B(-|s)=1/\#\{s'\mid(s'\ra s)\in\AAA\}$. 

\section{Related work}

\paragraph{Reinforcement learning.} 
GFlowNets are trained to sample proportionally the reward rather than maximize it as usual in RL. However, on tree-structured DAGs (autoregressive generation) are equivalent to RL with appropriate entropy regularization or soft Q-learning and control as inference \citep{buesing2019approximate,haarnoja2017reinforcement, haarnoja2018soft}. The experiments and discussion of~\cite{bengio2021flow} show how these methods can fail badly in the general DAG case well handled by GFlowNets. Signal propagation over sequences of several actions in trajectory balance is also related to losses used in RL computed on subtrajectories \citep{nachum2017bridging}.

\paragraph{Local exploration vs.\ amortized generalization to unseen modes.}
GFlowNets are also related to MCMC methods for sampling from unnormalized densities. While there has been work on accelerating or partially amortizing sampling from unnormalized densities over discrete spaces when exact sampling is intractable \citep{grathwohl2021oops,dai2020aloe}, some of it domain- or problem-specific \citep{xie2021mars}, GFlowNets treat the compositional structure in data as a learning problem (enabling generalization to unseen modes), not as a bias to build in to the sampler. Thus, the cost is amortized and borne by the learning of that structure through sampling, not through search at generation time.

\paragraph{Variational inference.} 
GFlowNets are connected with variational methods for fitting hierarchical generative models. The squared log-ratio loss proposed in \cite{mnih2014neural} as a control variate in the optimization of an evidence lower bound can be seen as a special case of trajectory balance. See \S\ref{app:vi} for further discussion, in which we prove that on-policy optimization of trajectory balance is equivalent to minimization of a certain KL divergence. Two recent papers  \cite{unifying,gfn-hvi} extend our observations.

\section{Experiments}

We evaluate the proposed trajectory balance objective against prior objectives for training GFlowNets as well as standard methods for learning policies that approximately sample objects proportionally to their rewards, like MCMC as well as against other RL techniques. Our experiments include the hypergrid and molecule synthesis tasks from \cite{bengio2021flow} and two new tasks in which $G$ is a directed tree. 

\subsection{Hypergrid environment}
\label{sec:experiments_hypergrid}

In this subsection, we study a synthetic hypergrid environment introduced by \cite{bengio2021flow}. This task is easier than others we study, but we include it for completeness, and because it allows us to illustrate some interesting behaviours.\footnote{Code: \url{https://gist.github.com/malkin1729/9a87ce4f19acdc2c24225782a8b81c15}.}

In this environment, the nonterminal states $\SSS^\circ$ form a $D$-dimensional hypergrid with side length $H$:
\begin{equation*}
\SSS^\circ=\{(s^1,\dots,s^D)\mid s^d\in\{0,1,\dots,H-1\},d=1,\dots,D\},
\end{equation*}
and actions are operations of incrementing one coordinate in a state by 1 without exiting the grid.
The initial state is $(0,\dots,0)$. For every nonterminal state $s$, there is also a termination action that transitions to a corresponding terminal state $s^\top$ (cf.\ footnote 1). The reward at a terminal state $s^\top=(s^1,\dots,s^d)^\top$ is given by
\begin{align*}
    R(s^\top)=R_0 &+ 0.5\prod_{d=1}^D\II\left[\abs{\frac{s^d}{H-1}-0.5}\in(0.25,0.5]\right] \nonumber + 2\prod_{d=1}^D\II\left[\abs{\frac{s^d}{H-1}-0.5}\in(0.3,0.4)\right]\hspace{-2mm}
\end{align*}
where $\II$ is an indicator function and $R_0$ is a constant controlling the difficulty of exploration. This reward has peaks of height $2.5+R_0$ near the corners of the hypergrid, surrounded by plateaux of height $0.5+R_0$. These plateaux are separated by wide troughs with reward $R_0$. An illustration with $H=8$ and $D=2$ is shown in the left panel of Fig.~\ref{fig:grid_uniform_learned}. This environment evaluates the ability of a GFlowNet to generalize from visited states to infer the existence of yet-unvisited modes.

We train GFlowNets with the detailed balance (DB) and trajectory balance (TB) objectives with different $H$, $D$, and $R_0$, in addition to reproducing the flow matching (FM) experiments and non-GFlowNet baselines based upon \cite{bengio2021flow}'s published code. Our GFlowNet policy model is a multilayer perceptron (MLP) that accepts as input a one-hot encoding of a state $s$ (with the goal of enabling generalization) and outputs logits of the forward and backward policies $P_F(-|-;\theta)$ and $P_B(-|-;\theta)$ (as well as the estimated state flow $F_\theta(s)$ in the case of DB). The forward policy, backward policy, and state flow models share all but the last weight matrix of the MLP. This is consistent with 
\cite{bengio2021flow}'s model, where an identical architecture accepted $s$ as input and output estimated flows $F_\theta(s,s')$ for all children $s'$ of $s$. Details are given in \S\ref{app:grid}.

We consider a 4-dimensional grid with $H=8$ and and a 2-dimensional grid with $H=64$. The two grids have the same number of terminal states, but the 2-dimensional grid has longer expected trajectory lengths. For both grid sizes, we consider $R_0=0.1,0.01,0.001$, with smaller $R_0$ giving environments that are more difficult to explore due to the lower likelihood for models to cross the low-reward valley. For the models trained with DB and TB, we also explore the effect of fixing the backward policy to be uniform (\S\ref{sec:canonical_solution}).

\paragraph{Results.}
In Fig.~\ref{fig:grid_l1_curves}, we plot the evolution over the course of training of the $L_1$ error between the true reward distribution (the reward $R(x)$ normalized over all possible terminal states $x\in\XXX$) and the empirical distribution of the last $2\cdot10^5$ visited states for all settings (which would have a probability of 0 on $x$'s not visited).
Although convergence to the same stable minimum is achieved by all models and settings, DB and TB training tend to converge faster than FM, with a slight benefit of TB over detailed balance in the 4-D environment. 

\paragraph{Effect of uniform $P_B$.}
Note the difference in learning speed between models with fixed uniform backward policy $P_B$ and models with learned $P_B$. As noted in \S\ref{sec:canonical_solution}, when $P_B$ is fixed, there is a unique $P_F(-|-;\theta)$ that globally minimizes the objective, and it  may be approached slowly. However, if $P_B$ and $P_F$ are permitted to evolve jointly, they may more quickly approach one of the many optimal solutions. This is confirmed by the much faster convergence of models with learned $P_B$ on the $64\times64$ grid.
We have observed that, especially for large grid sizes, when $P_B$ and $P_F$ are both learned, the model has a bias towards first taking all actions in one coordinate direction, then proceeding in the other direction until terminating (as in the right panel of Fig.~\ref{fig:grid_uniform_learned}), perhaps because a constant distribution over two actions (`continue to the right' and `terminate') can be modeled with higher precision over a large portion of the grid than the complex position-dependent distribution as shown in the centre panel of Fig.~\ref{fig:grid_uniform_learned}.

\begin{figure}[t]
    \centering
    \includegraphics[width=0.256\linewidth]{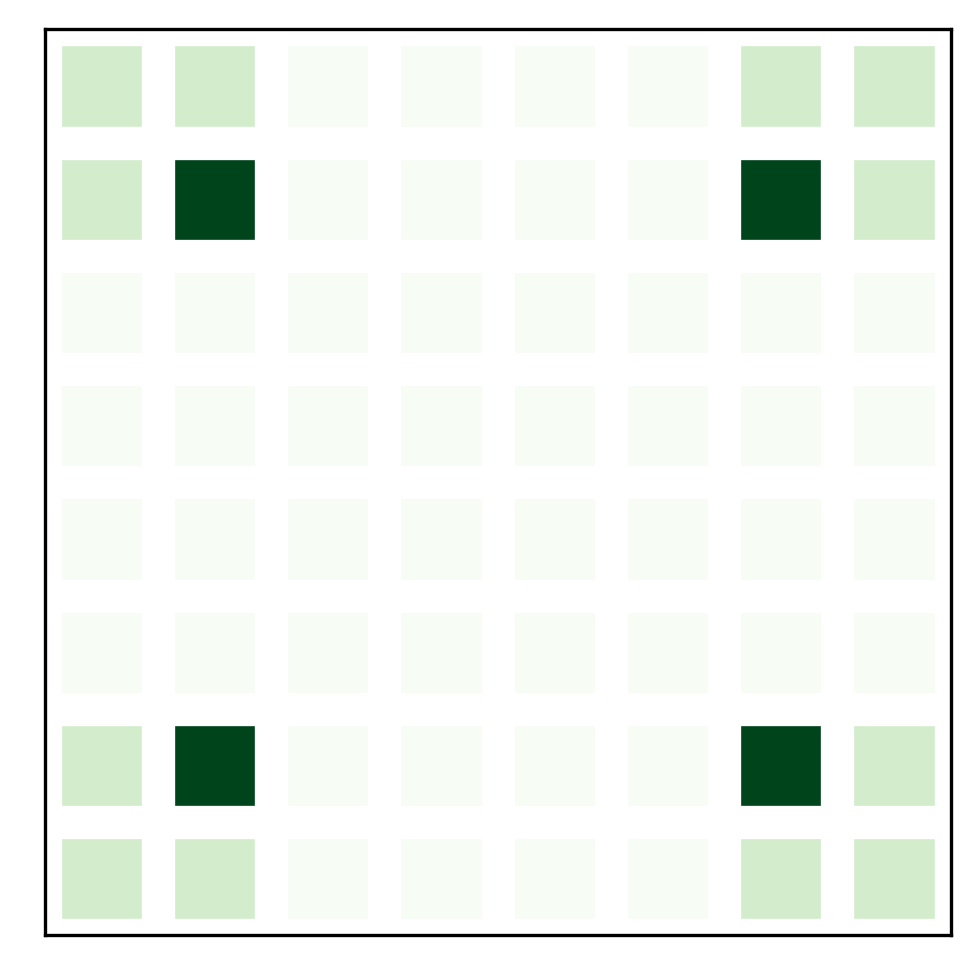}\hfill
    \raisebox{0.1cm}{\includegraphics[width=0.255\linewidth,trim=12 12 10 10,clip]{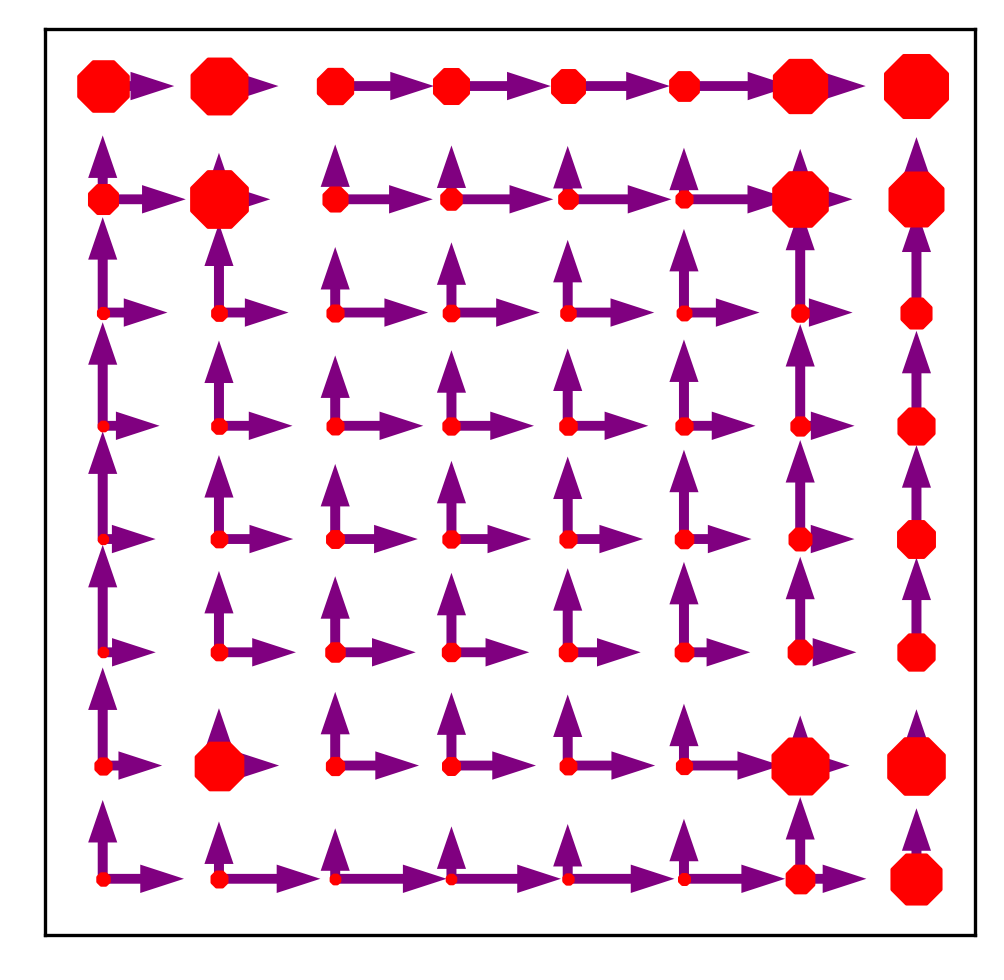}}\hfill
    \raisebox{0.1cm}{\includegraphics[width=0.255\linewidth,trim=12 12 10 10,clip]{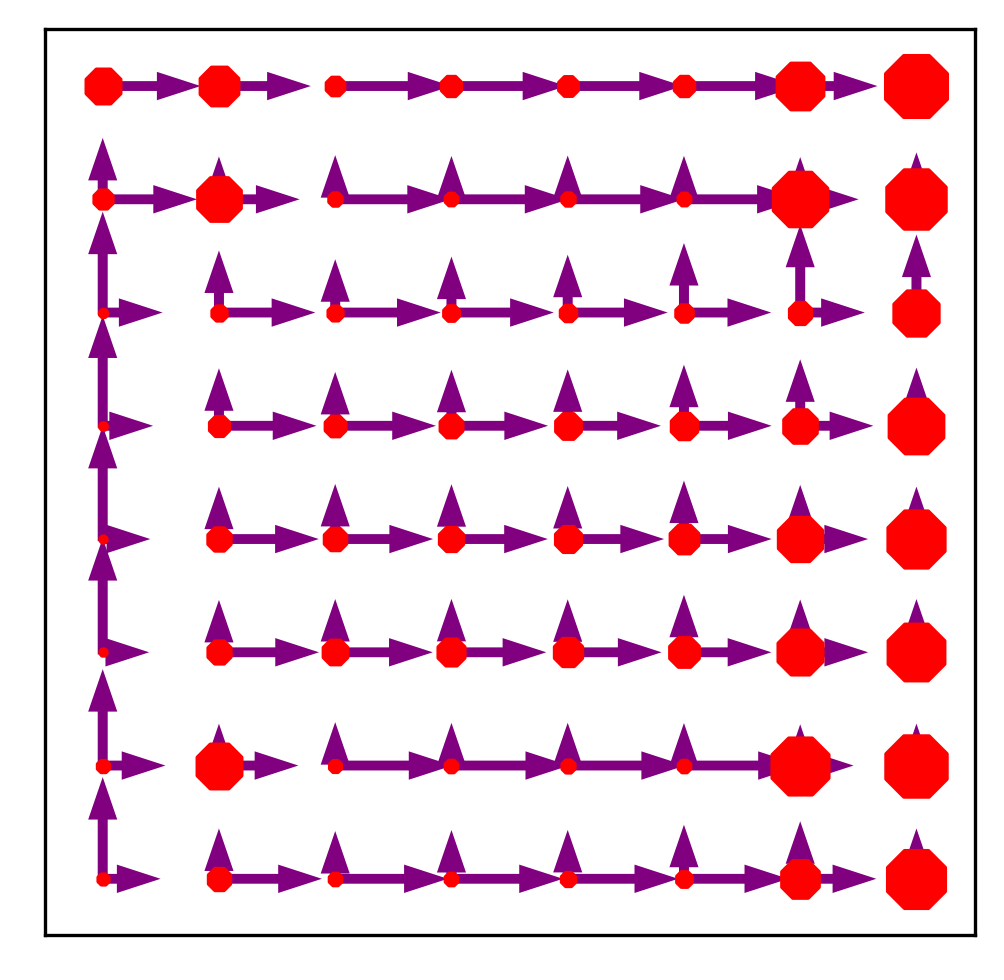}}
    \caption{\textit{Left:} The reward function on an $8\times8$ grid environment (\S\ref{sec:experiments_hypergrid}) with $R_0=0.1$. \textit{Centre and right:} Two forward action policies -- with fixed uniform $P_B$ and with a learned non-uniform $P_B$ -- that sample from this reward. The lengths of arrows pointing up and right from each state are proportional to the likelihoods of the corresponding actions under $P_F$, and the sizes of the red octagons are proportional to the termination action likelihoods.}
    \label{fig:grid_uniform_learned}
\end{figure}

\begin{figure}[t]
\centering
\includegraphics[width=0.49\linewidth,trim=0 10 0 0,clip]{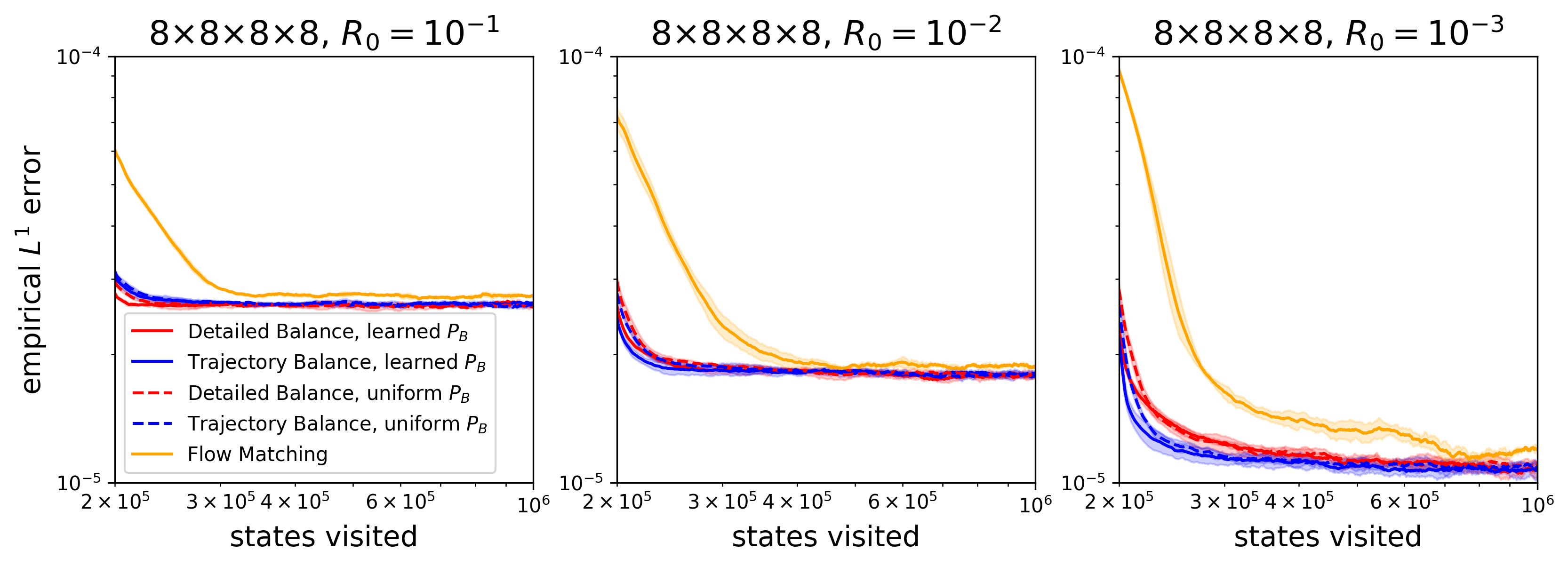}
\includegraphics[width=0.49\linewidth,trim=0 10 0 0,clip]{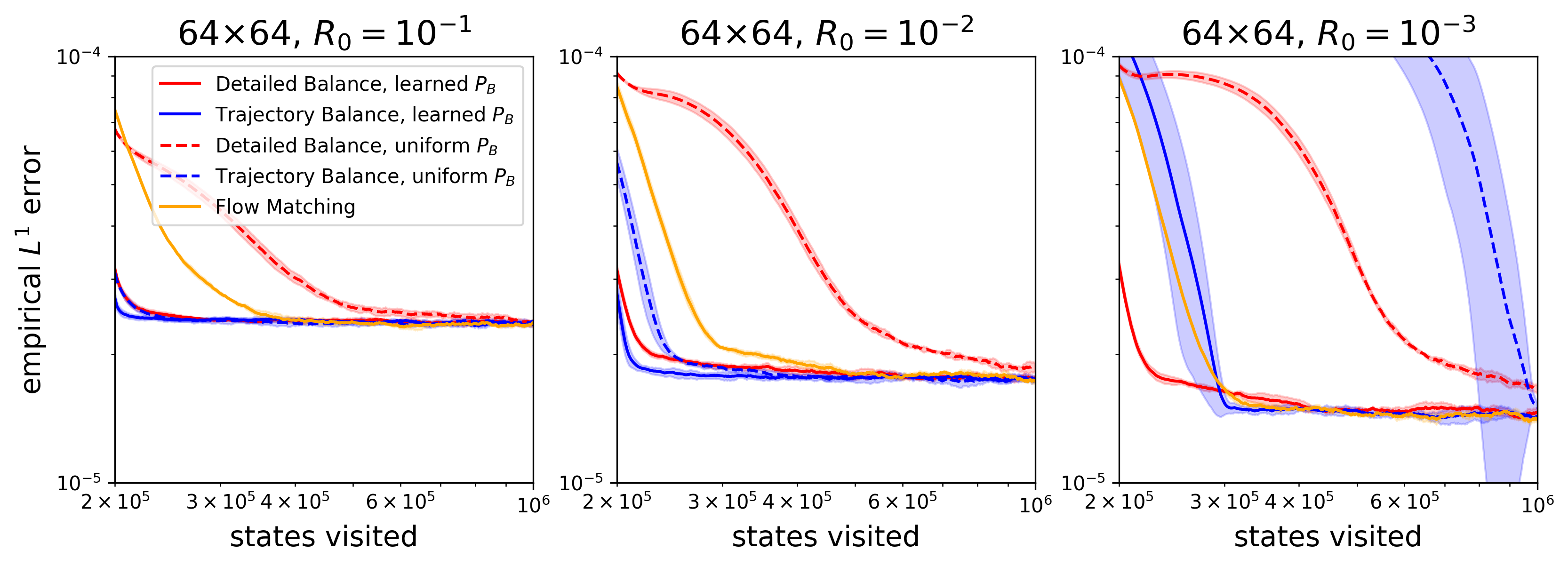}
    \caption{Empirical $L^1$ error between true and sampled state distributions on the grid environment with varying grid size and $R_0$. Mean and standard error over 5 seeds. The curves for PPO and MCMC baseline would lie outside the plot bounds.}
    \label{fig:grid_l1_curves}
\end{figure}

\subsection{Small drug molecule synthesis}
\label{sec:experiments_molecule}
Next, we consider the molecule generation task~\citep{xie2021mars, Jin_2020, kumar2012fragment, gilmer2017neural, shi2021masked} introduced for GFlowNets in \cite{bengio2021flow}. We extend \cite{bengio2021flow}'s published code with an implementation of the TB and DB objectives.\footnote{Code: \url{https://github.com/GFNOrg/gflownet/tree/trajectory_balance}.}

\begin{figure}[t]
    \centering
    \includegraphics[width=\linewidth,trim=50 10 50 0,clip]{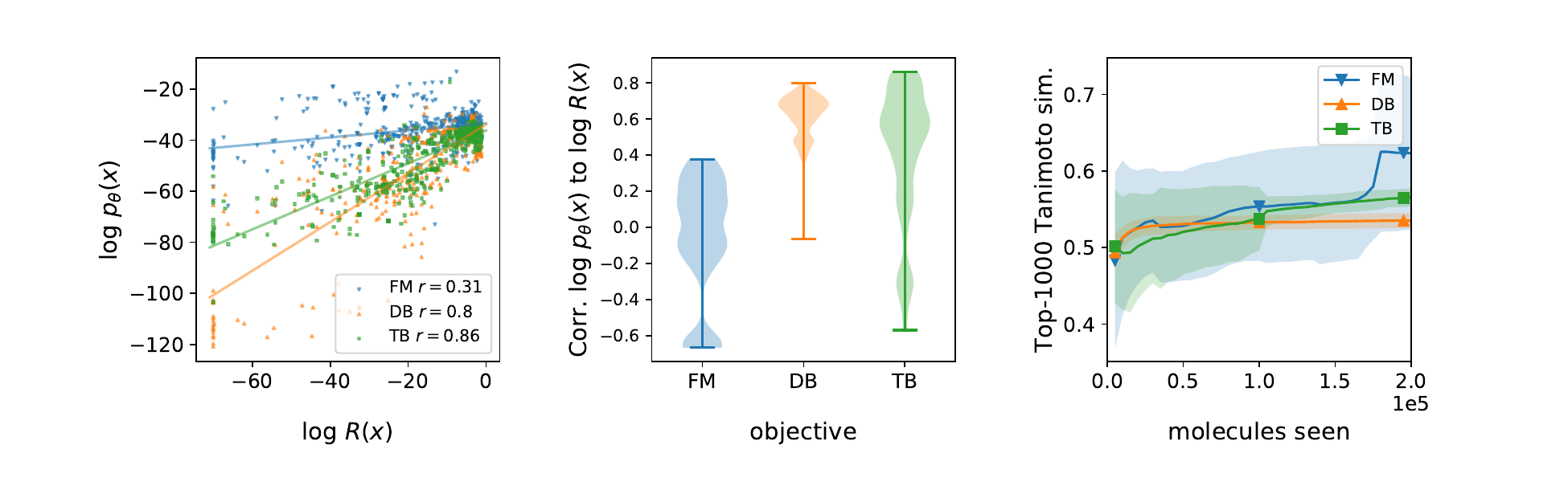}\vskip-3mm
    \caption{\textit{Left, Centre:} Pearson correlations between rewards and sampling probability. $\log p_\theta(x)$ is the log-likelihood that a trajectory sampled from the learned policy $P_F(-|-;\theta)$ terminates at $x$. 
    \textit{Left:} Scatter plot on a test set of $x$'s for the best hyperparameters of TB, FM, and DB. \textit{Centre:} Violin plot of correlations for 16 hyperparameter settings and 3 seeds for each setting, showing TB being capable of fitting better.
    \textit{Right:} Average pairwise Tanimoto similarity for the top 1000 samples generated by GFlowNets as training progresses. Lines are the average across runs, shaded regions the standard deviation. Models trained with TB have consistently lower similarity than those with FM, hence greater diversity. We hypothesize that the higher variance, in correlation and diversity, of TB relative to DB is related to high variance of the stochastic gradient; see \cite{subtb}.}
    \label{fig:corr}
\end{figure}

The goal is to generate molecules, in the form of graphs, with a low binding energy to the 4JNC inhibitor of the sEH (soluble epoxide hydrolase) protein. The graphs generated are junction trees~\citep{Jin_2020} of a vocabulary of building blocks. The reward is defined as the normalized negative binding energy as predicted by a \emph{proxy} model, itself trained to predict energies computed via docking simulations~\citep{trott2010autodock}. The maximum trajectory length is 8, with the number of actions varying between around 100 and 2000 (the larger a molecule, the more possible additions exist), making $|{\cal X}|$ about $10^{16}$. 

\paragraph{Results.}
We plot in Fig.~\ref{fig:corr} (left and centre) the correlation of log-reward and log-sampling probability (the likelihood that a trajectory sampled from the learned policy terminates at $x$) for GFlowNets trained using TB, FM and DB. This correlation is significantly higher for models trained with TB. The points $x$ shown are from a fixed held-out set to which the GFlowNets do not have access in training. Note that a perfect model would have correlation 1, as $\log R(x)$ and $\log p_\theta(x)$ would differ by a constant (equal to $\log Z$) that is independent of $x$.

In Fig.~\ref{fig:corr} (right) we plot the average pairwise Tanimoto similarity~\citep{bender2004molecular} 
for the 1000 highest-reward samples generated over the course of training. We see that TB consistently generates more diverse molecules than FM. These results showcase the benefits of TB, not only for faster temporal credit assignment, but for generalization and diversity. In addition, TB has up to $5\times$ runtime speedup over FM as the enumeration of parents is not needed. See \S\ref{app:mol_synth} for further discussion.



\subsection{Autoregressive sequence generation}
\label{sec:experiments_sequence}

Finally, we evaluate the TB objective on the task of autoregressive sequence generation. In \S\ref{sec:bit_sequences}, we study the effect of trajectory length and action space size on the learning dynamics in GFlowNets.  In \S\ref{sec:amps}, we consider the more realistic task of generating peptides (short protein sequences with anti-microbial properties) and evaluate GFlowNets against standard RL and MCMC baselines. 

\subsubsection{Bit sequences}
\label{sec:bit_sequences}

\paragraph{Task.} 
The goal is to generate bit sequences  of a fixed length $n=120$ ($\mathcal{X} = \{0, 1\}^{n}$), where the reward is designed to have modes at a given fixed set $M \subset \mathcal{X}$ that is unknown to the learner.
The reward for a sequence $x$ is defined as $R(x) = \exp(1-\min_{y \in M}d(x, y)/n)$, where $d$ is the edit distance. 
We describe the procedure to generate $M$  in \S\ref{app:bit_sequence}. 

For different integers $k$ dividing $n$, we design action spaces for left-to-right generation of sequences in $\XXX$, where a complete trajectory has $\frac nk$ actions and each action appends a $k$-bit `word' to the end of a partial sequence. A forward policy on this state space is the same an autoregressive sequence model over a vocabulary of size $2^k$. Varying $k$ while fixing $\mathcal{X}$ and $M$ allows us to study the effect of the tradeoff between trajectory lengths ($\frac nk$) and the action space sizes ($|V|=2^k$) without changing the underlying probabilistic modeling problem.

We compare GFlowNets trained with the TB objective against GFlowNets trained with the FM objective (equivalent to DB and Soft Q-Learning in this case) and two non-GFlowNet baselines: A2C with Entropy Regularization \cite{williams1991function,mnih2016asynchronous}, Soft Actor-Critic~\cite{haarnoja2018soft,christodoulou2019soft} and MARS~\citep{xie2021mars}. We use a Transformer-based architecture \cite{vaswani2017attention} across all the methods. See \S\ref{app:bit_sequence} for details.

To evaluate the methods we use (1) Spearman correlation between the probability of generating the sequence $p(x)=F(x)/Z$ and its reward $R(x)$ on a test set sampled approximately uniformly over the possible values of the reward, (2) number of modes captured (number of reference sequences from $M$ for which a candidate within a distance $\delta$ has been generated). In our experiments, \mbox{$n=120$}, \mbox{$|M|=60$}, \mbox{$k \in\{1,2,4,6,8,10\}$}, and \mbox{$\delta=28$}.

\paragraph{Results.}
Fig.~\ref{fig:bits} (left) presents the results for the Spearman correlation evaluation. We observe that GFlowNets trained with the TB objective learn policies that correlate best with the reward $R(x)$ across all action spaces. In particular, we observe the effect of inefficient credit assignment in GFlowNets trained with FM, as the correlation improves with increasing $k$, i.e., shorter trajectories. On the other hand, large action spaces also hurt GFlowNets trained with the FM objective, while the TB objective is robust to them. Additionally, we can observe in Fig.~\ref{fig:bits} (right) that for fixed $k$, GFlowNets trained with TB discover more modes faster than other methods.

\begin{figure}[t]
    \centering
    \includegraphics[height=0.28\textwidth,trim=0 10 200 0,clip]{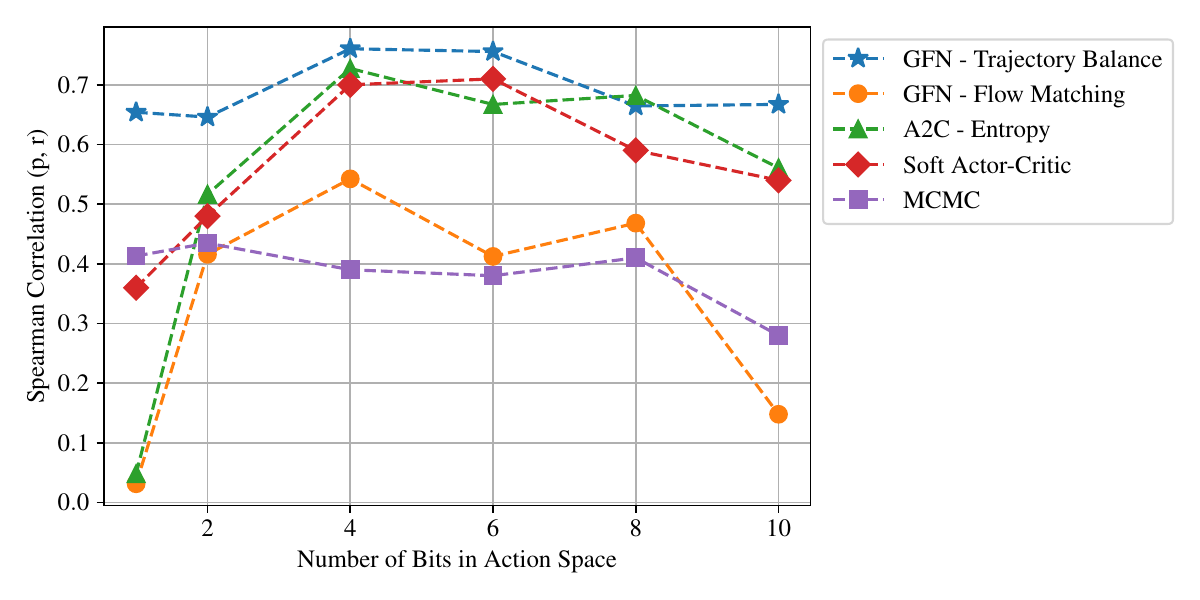}
    \includegraphics[height=0.28\textwidth,trim=0 10 0 0,clip]{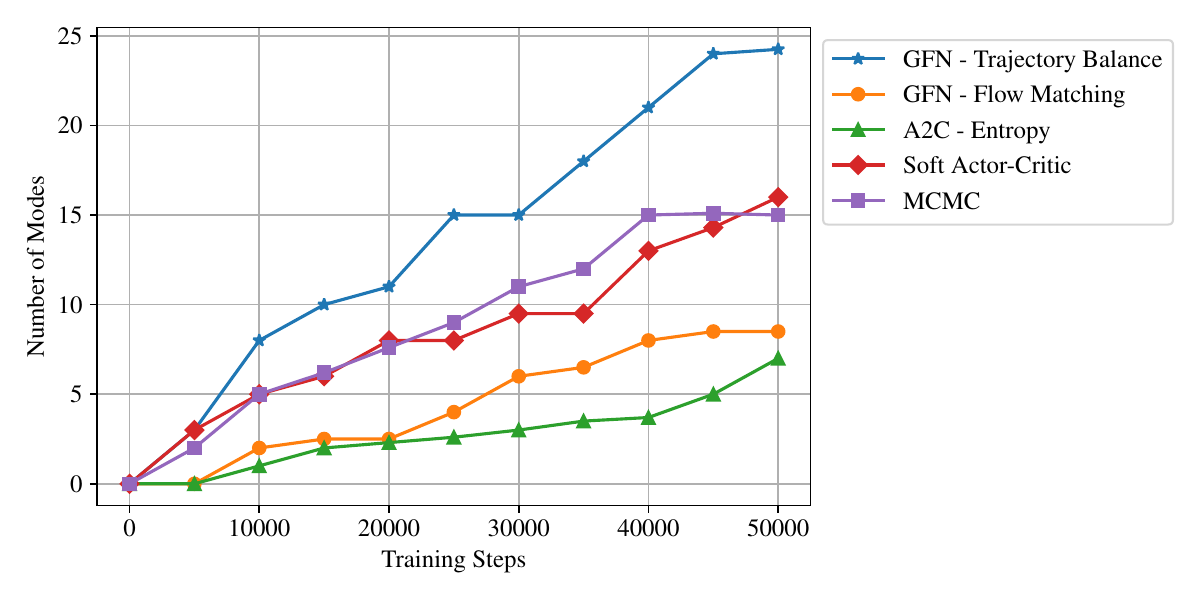}
    \caption{\textit{Left:} Spearman correlation of the sampling probability under different learned policies and reward on a test set, plotted against the number of bits $k$ in the symbols in $V$ in the bit sequence generation task. GFlowNets trained with trajectory balance learn policies that have the highest correlation with the reward $R(x)$ and are robust to length and vocabulary size. \textit{Right:} Number of modes discovered over the course of training on the bit sequence generation task with $k=1$. GFlowNets trained with trajectory balance discover more modes faster.}
    \label{fig:bits}
\end{figure}

\subsubsection{Anti-Microbial Peptide (AMP) generation}
\label{sec:amps}

\begin{wraptable}[4]{r}{0.45\textwidth}
\vspace{-3.75em}
    \caption{Results on the AMP generation task.}
\resizebox{0.45\textwidth}{!}{
\begin{tabular}{@{}lll@{}}
\toprule
                   & Top 100 Reward & Top 100 Diversity \\ \midrule
GFN-$\LTB$  & ${\bf 0.85} \pm 0.03$ & $\bf 18.35$ $\pm 1.65$\\
GFN-$\LFM$/$\LDB$  & $0.78 \pm 0.05$ & $12.61 \pm 1.32$ \\
SAC & $0.80\pm 0.01$ & $8.36 \pm 1.44$\\
AAC-ER    & $0.79 \pm 0.02$ & $7.32 \pm 0.76$ \\
MCMC      & $0.75 \pm 0.02$  & $12.56 \pm 1.45$  \\
\bottomrule
\end{tabular}
}
    \label{tab:amp_results}
\end{wraptable}

In this section, we consider the practical task of generating peptide sequences that have anti-microbial activity. The goal is to generate a protein sequence (where the vocabulary consists of 20 amino acids and a special end-of-sequence action), with maximum length $60$. We take $6438$ known AMP sequences and $9522$ non-AMP sequences from the DBAASP database~\cite{pirtskhalava2021dbaasp}. We then train a classifier on this dataset, using $20\%$ of the data as a validation set. The probability output by this model for a sequence to be classified as an AMP is used as the reward $R(x)$ in our experiments. 

The state and actions are designed just as in the previous experiment, with each action appending a symbol to the right of a state. We again compare TB and FM, as well as A2C with entropy regularization, SAC and MCMC as baselines. We again use Transformers for all the experiments on this task; see further details in Appendix~\ref{app:amp_gen}.
We generate $2048$ sequences from each method, and pick the top 100 sequences ranked by their reward $R(x)$. As metrics, we use the mean reward for these 100 sequences and the average pairwise edit distance among them as a measure of \emph{diversity}.

\paragraph{Results.}
We present the results 
in Table~\ref{tab:amp_results}, where we observe that GFlowNets trained with TB outperform all baselines on both performance and diversity metrics.

\section{Discussion and conclusion}

We introduced a novel training loss for GFlowNets, trajectory balance (TB), which yields faster and better training than the previously proposed flow matching (FM) and detailed balance (DB) losses. We proved that this objective, when minimized, yields the desired GFlowNet property of sampling from the target distribution specified by an unnormalized reward function. This new loss was motivated by the observation that the FM and DB losses are local in the action sequence and may require many iterations for credit assignment to propagate to early actions: if a gradient update introduces a flow inconsistency at some state far from the initial state (such as when a novel high-reward state is sampled), the parent of this state must be visited before the update is propagated closer to the root, akin to the slow propagation of reward signals in temporal difference learning.

We empirically found that TB discovered more modes of the energy function faster and was more robust than FM and DB to the exponential growth of the state space, due in part to the lengths of sequences and in part to the size of the action space. 
A factor to consider when interpreting our experimental results is that because we use a neural net rather than a tabular representation of policies, the early states' transitions are informed by downstream credit assignment via parameter sharing. Early states also get many more visits because there are more possible states near the ends of sequences than near the initial state. Finally, TB trades off the advantage of immediately providing credit to early states with the disadvantage of relying on sampling of long trajectories and thus a potentially higher variance of the stochastic gradient. The high gradient variance is a possible limitation of TB in difficult environments, and ways to overcome it by interpolating between local and trajectory-level objectives have been studied in subsequent work \cite{subtb}.

All in all, we found that trajectory balance is a superior training objective in a broad set of experiments, making it the default choice for future work on GFlowNets.

\newpage 

\section*{Acknowledgments}

This research was enabled in part by computational resources provided by Compute Canada. All authors are funded by
their primary academic institution. We also acknowledge funding from CIFAR, Samsung, IBM, Microsoft, and the Banting Postdoctoral Fellowship.

The authors are grateful to all the members of the Mila GFlowNet group, in particular to Dinghuai Zhang, for many fruitful research discussions, as well as to Yiheng Zhu for feedback on the published code. We also thank the anonymous reviewers for their comments.

\bibliography{ref}
\bibliographystyle{style/icml2022}

\section*{Checklist}

\begin{enumerate}

\item For all authors...
\begin{enumerate}
  \item Do the main claims made in the abstract and introduction accurately reflect the paper's contributions and scope?
    \answerYes{}
  \item Did you describe the limitations of your work?
    \answerYes{This is mainly a paper about theory and algorithms. Future work that applies these algorithms, in particular for domains where they can most immediately have an impact (e.g., molecule design for drug discovery), should consider the limitations and negative societal impacts of these applications.}
  \item Did you discuss any potential negative societal impacts of your work?
    \answerNA{See previous.}
  \item Have you read the ethics review guidelines and ensured that your paper conforms to them?
    \answerYes{}
\end{enumerate}

\item If you are including theoretical results...
\begin{enumerate}
  \item Did you state the full set of assumptions of all theoretical results?
    \answerYes{}
        \item Did you include complete proofs of all theoretical results?
    \answerYes{See \S\ref{app:proofs}.}
\end{enumerate}

\item If you ran experiments...
\begin{enumerate}
  \item Did you include the code, data, and instructions needed to reproduce the main experimental results (either in the supplemental material or as a URL)?
    \answerYes{For the grid and molecule environments.} \answerNo{For the other environments.}
  \item Did you specify all the training details (e.g., data splits, hyperparameters, how they were chosen)?
    \answerYes{See the Appendix.}
        \item Did you report error bars (e.g., with respect to the random seed after running experiments multiple times)?
    \answerYes{See the Appendix.}
        \item Did you include the total amount of compute and the type of resources used (e.g., type of GPUs, internal cluster, or cloud provider)?
    \answerYes{See the Appendix.}
\end{enumerate}

\item If you are using existing assets (e.g., code, data, models) or curating/releasing new assets...
\begin{enumerate}
  \item If your work uses existing assets, did you cite the creators?
    \answerYes{See the relevant experiment sections.}
  \item Did you mention the license of the assets?
    \answerNA{}
  \item Did you include any new assets either in the supplemental material or as a URL?
    \answerNA{References to the molecule and AMP sequence data are provided.}
  \item Did you discuss whether and how consent was obtained from people whose data you're using/curating?
    \answerNA{No new data collection was done.}
  \item Did you discuss whether the data you are using/curating contains personally identifiable information or offensive content?
    \answerNA{Not relevant for the domains studied.}
\end{enumerate}

\item If you used crowdsourcing or conducted research with human subjects...
\begin{enumerate}
  \item Did you include the full text of instructions given to participants and screenshots, if applicable?
    \answerNA{}
  \item Did you describe any potential participant risks, with links to Institutional Review Board (IRB) approvals, if applicable?
    \answerNA{}
  \item Did you include the estimated hourly wage paid to participants and the total amount spent on participant compensation?
    \answerNA{}
\end{enumerate}

\end{enumerate}

\newpage
\appendix

\section{Theoretical appendix}
\subsection{Proof of Proposition 1}
\label{app:proofs}

Recall Proposition~\ref{prop:main}:
\begin{proposition*}
\mainproposition
\end{proposition*}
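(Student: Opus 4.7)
For part (a), the argument is essentially a tautology given that the trajectory balance constraint (\ref{eqn:tb_constraint}) is already derived in the main text by direct algebraic manipulation. Under the stated hypotheses, $P_F, P_B, Z_\theta$ are the forward/backward policies and normalizing constant of a Markovian flow $F$ satisfying $F(x) = R(x)$, so (\ref{eqn:tb_constraint}) gives $Z_\theta \prod_{t=1}^n P_F(s_t|s_{t-1}) = F(x)\prod_{t=1}^n P_B(s_{t-1}|s_t) = R(x)\prod_{t=1}^n P_B(s_{t-1}|s_t)$ for every complete trajectory $\tau$, and the squared logarithm defining $\LTB(\tau)$ vanishes identically.

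For part (b), the plan is to extract the global reward-matching identity from the family of trajectory-level equations by summing over all trajectories terminating at a fixed $x$. Concretely, fix $x \in \XXX$; the hypothesis $\LTB(\tau)=0$ for every complete trajectory $\tau=(s_0 \ra \cdots \ra s_n = x)$ says $Z_\theta \prod_{t=1}^n P_F(s_t|s_{t-1};\theta) = R(x) \prod_{t=1}^n P_B(s_{t-1}|s_t;\theta)$. Summing over all such $\tau$, the left-hand side becomes $Z_\theta \cdot P_\theta(s_n = x)$ by the factorization (\ref{eqn:flow_pf}) of the trajectory distribution induced by $Z_\theta$ and $P_F$, which equals $F_\theta(x)$ by (\ref{eqn:flow_distribution}). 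It then remains to show that $\sum_{\tau:\,s_n = x} \prod_{t=1}^n P_B(s_{t-1}|s_t;\theta) = 1$, after which we conclude $F_\theta(x) = R(x)$ for all $x \in \XXX$, which is (\ref{eqn:reward_matching}).

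The main obstacle is really only this last normalization, which I would handle by a direct combinatorial argument leveraging the DAG structure. Because $G$ is a finite DAG with $s_0$ as its unique source, iteratively sampling a parent from $P_B(-|s;\theta)$ starting at any terminal $x$ produces, in finitely many steps, a random path terminating at $s_0$; the probability that this backward process traverses a particular path $(x = s_n \ra s_{n-1} \ra \cdots \ra s_0)$ is exactly $\prod_{t=1}^n P_B(s_{t-1}|s_t;\theta)$. Summing over all complete forward trajectories ending at $x$ is therefore summing a full probability distribution over backward paths, which gives total mass $1$. Plugging this back yields $F_\theta(x) = R(x)$, and then by (\ref{eqn:flow_distribution}) the forward policy $P_F(-|-;\theta)$ assigns a terminal state $x$ likelihood $F_\theta(x)/Z_\theta = R(x)/Z_\theta \propto R(x)$, completing the proof.
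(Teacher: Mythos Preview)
Your proposal is correct and follows essentially the same approach as the paper's proof: part (a) is handled by direct substitution of $R(x)$ for $F(x)$ in the already-established constraint (\ref{eqn:tb_constraint}), and part (b) proceeds by summing the trajectory-level identity over all complete trajectories ending at a fixed $x$, identifying the forward side as $F_\theta(x)$ and using that the backward-policy products sum to $1$. Your justification of the latter via the backward random walk on the finite DAG is exactly what the paper calls ``iterating the law of total probability'' (equation (\ref{eqn:total_probability})), so the arguments coincide.
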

\begin{proof}
Part (a) is an elementary manipulation of the trajectory balance constraint (\ref{eqn:tb_constraint}), with $R(x)$ substituted for $F(x)$ by the reward matching assumption (\ref{eqn:reward_matching}).

Conversely, if $\LTB(\tau)=0$ for all complete trajectories $\tau=(s_0\ra\to\dots\to s_n=x)$, then the policies $P_F(-|-;\theta)$ and $P_B(-|-;\theta)$ satisfy the constraint
\begin{equation}
    Z\prod_{t=1}^nP_F(s_t|s_{t-1};\theta)=R(x)\prod_{t=1}^nP_B(s_{t-1}|s_t;\theta).
    \label{eqn:tb_constraint_reward}
\end{equation}
Let $x$ be a terminal state. By iterating the law of total probability, we have
\begin{equation}
    \sum_{\tau=(s_0\ra s_1\ra\dots\ra s_n=x)}\prod_{t=1}^nP_B(s_{t-1}|s_t;\theta)=1.
    \label{eqn:total_probability}
\end{equation}
(Each term in this sum is the conditional likelihood of $\tau$ conditioned on terminating at $s_n=x$ under the the Markovian flow $F_\theta'$ uniquely determined by setting terminal state flows $F_\theta'(x)=R(x)$ and backward policy $P_B(-|-;\theta)$, cf.\ the uniqueness properties.)

On the other hand, we have
\begin{align*}
    F_\theta(x)
    &= \sum_{\tau=(s_0\ra\dots\ra s_n)=x}F_\theta(\tau) & \text{(by definition of state flows)}\\
    &= \sum_{\tau=(x_0\ra\dots\ra s_n)=x}Z\prod_{t=1}^{n}P_F(s_t|s_{t-1};\theta) & \text{(by (\ref{eqn:flow_pf}))}\\
    &= \sum_{\tau=(x_0\ra\dots\ra s_n)=x}R(x)\prod_{t=1}^nP_B(s_{t-1}|s_t;\theta) & \text{(by (\ref{eqn:tb_constraint_reward}))} \\
    &=R(x) & \text{(by (\ref{eqn:total_probability}))}.
\end{align*}
We conclude that $F_\theta$ satisfies (\ref{eqn:reward_matching}), as desired. 

(We remark that one can show in a similar way that $F_\theta(s\ra s')=F_\theta'(s\ra s')$ for all actions $(s\ra s')\in\AAA$, and thus, by the uniqueness properties, $F_\theta=F_\theta'$, i.e., the forward and backward policies determine the same Markovian flow.)
\end{proof}

\subsection{Generalizations}
\label{app:extensions}

The trajectory balance constraint (\ref{eqn:tb_constraint}) can be generalized to partial (not complete) trajectories, i.e., those that do not start at $s_0$ and end in a terminal state. Generalizations such as those we present here could be useful for a future goal of modularized or hierarchical GFlowNets, where each module (or low-level GFlowNet) can apply them to just the subsequences they have access to (cf.\ \S9.4 and \S10.2 in \cite{bengio2021foundations}).

\paragraph{Subtrajectory balance.} If $\tau=(s_m\ra s_{m+1}\ra\dots\ra s_n$) is a partial trajectory (i.e., $(s_t\ra s_{t+1})\in\AAA$ for all $t$), then, for any Markovian flow $F$ with forward and backward policies $P_F$ and $P_B$,
\begin{equation}
F(s_m)\prod_{t=m}^{n-1}P_F(s_{t+1}\mid s_t)=F(s_n)\prod_{t=m}^{n-1}P_B(s_t\mid s_{t+1}).
\label{eqn:subtb_constraint}
\end{equation}
This can be derived by showing that both sides are equal to
\begin{equation}\sum_{\tau=(\dots\ra s_m\ra s_{m+1}\ra\dots\ra s_n\ra\dots)\in\TTT}F(\tau).\end{equation}

The trajectory balance constraint (\ref{eqn:tb_constraint}) is the special case of this for full trajectories, while the detailed balance constraint (\ref{eqn:db_constraint}) is the special case of trajectories wth only one edge. This subtrajectory balance constraint can be converted into a learning objective: a model can output estimated state flows $F_\theta(s)$ only for certain nonterminal states $s$ (``hubs"), and the error in (\ref{eqn:subtb_constraint}) optimized along segments of trajectories between these hubs. Thus the detailed balance loss corresponds to all nodes being hubs, and the trajectory balance loss corresponds to only the initial state $s_0$ being a hub.

Subtrajectory balance has been explored and was shown to have convergence benefits in a work that appeared while this paper was under review \cite{subtb}.

\paragraph{Non-forward trajectories.}
Trajectory balance has a more general form for trajectories that have a mix of forward and backward steps. Here we describe just one example: terminal-terminal paths that take several backward steps, then take several forward steps.

Let $s_1=s'_1$ be any state (not necessarily a child of the GFlowNet's initial state $s_0$) and $(s_1\ra s_2\ra\dots\ra s_n)$ and $(s'_1\ra s'_2\ra\dots\ra s'_{n'})$ two trajectories from $s_0$ to terminal states. Then the following must hold for any Markovian flow $F$:
\begin{equation}
R(s'_{n'})\prod_{t=1}^{n'-1} P_B(s'_t\mid s'_{t+1})\prod_{t=1}^{n-1} P_F(s_{t+1}\mid s_t)=
R(s_n)\prod_{t=1}^{n-1} P_B(s_t \mid s_{t+1}) \prod_{t=1}^{n'-1} P_F(s'_{t+1} \mid s'_t).
\label{eqn:back_forth_tb_constraint}
\end{equation}
That is, the path that goes ``backward, then forward” from $s_n$ to $s'_{n'}$ must have the same likelihood no matter in which direction it is traversed, up to the ratio of rewards at the endpoints. A simple way to derive (\ref{eqn:back_forth_tb_constraint}) is by writing the trajectory balance constraint for two paths from the GFlowNet's initial state to $s_n$ and $s'_{n'}$ that are identical until $s_1$ and then diverge, then dividing one constraint by the other. Notice that the flow $F(s_1)$ is not present here. Thus, (\ref{eqn:back_forth_tb_constraint}) can be converted into an learning objective does not require a model to output any state flows (even the initial state flow $Z$). 

Such terminal-terminal paths could also be used for exploration of $\XXX$ with MCMC-like local search algorithms \cite{zhang2022generative}. The special case of `one step back, two steps forward' paths was used for a graph generation problem in Bayesian structure learning \cite{deleu2022bayesian}.

\subsection{GFlowNets and variational methods}
\label{app:vi}

\newcommand{\bq}[1]{\left[#1\right]}
\newcommand{\pq}[1]{\left(#1\right)}
\let\f\frac
\newcommand{\Var}{{\rm Var}}

We build a connection between the TB loss for GFlowNets and a na\"ive variational approach to fitting sequential samplers.

Suppose that the backward policy $P_B$ is fixed, and suppose for ease of the derivation that it is known that $\sum_{x\in\XXX}R(x)=1$. As in the main text, we write $P_B(\tau|x)$ for the likelihood of obtaining the reverse of the trajectory tau when sampling from the backward policy starting from $x$. 

The on-policy trajectory balance objective has gradient with respect to the parameters of $P_F$:
\begin{equation}
    \EE_{(\tau,x) \sim P_F}\bq{\nabla_\theta\pq{\log\f{R(x)P_B(\tau|x)}{P_F(\tau;\theta)}}^2}
    =
    \EE_{(\tau,x) \sim P_F}\bq{-2\log\f{R(x)P_B(\tau|x)}{P_F(\tau;\theta)}\nabla_\theta\log P_F(\tau;\theta)},
    \label{eqn:tb_gradient}
\end{equation}
which is estimated by sampling $\tau\sim P_F$ (terminating in $x\in\XXX$) and computing the term inside the expectation.

The model $P_F$ could also be optimized with respect to an evidence lower bound (ELBO) objective, i.e., minimizing $\KL(P_F(\tau) \| R(x)P_B(\tau|x))$. We derive the gradient of this KL:
\begin{align}
    &\nabla_\theta\KL(P_F(\tau;\theta)\|R(x)P_B(\tau|x))\\
    &=\nabla_\theta\EE_{(\tau,x)\sim P_F}\bq{\log\f{P_F(\tau)}{R(x)P_B(\tau|x)}}\nonumber\\
    &=\EE_{(\tau,x)\sim P_F}\bq{\nabla_\theta\log\f{P_F(\tau;\theta)}{R(x)P_B(\tau|x)}+\log\f{P_F(\tau;\theta)}{R(x)P_B(\tau|x)}\nabla_\theta\log P_F(\tau;\theta)}.
    \label{eqn:kl_gradient}
\end{align}
The last step is the standard score function trick, and the Reinforce estimator optimizes the KL by sampling $\tau \sim P_F(\tau;\theta)$ and using the term inside the expectation as the direction of the gradient step. But now notice that
\begin{equation}
    \EE_{(\tau,x)\sim P_F}\bq{\nabla_\theta\log\f{P_F(\tau;\theta)}{R(x)P_B(\tau|x)}}
    = 
    \EE_{(\tau,x)\sim P_F}\bq{\nabla_\theta\log P_F(\tau;\theta)}=0,
\end{equation}
because of the constraint $\sum_\tau P_F(\tau;\theta)=1$. We conclude that  the expected trajectory balance gradient (\ref{eqn:tb_gradient}) is equal to the expected Reinforce gradient (\ref{eqn:kl_gradient}) up to a constant.

However in the vicinity of the optimum (when TB and KL equal 0), the TB graident estimator has lower variance, as the following computation shows:
\begin{align*}
    &\Var_{(\tau,x)\sim P_F}\bq{\log\f{p(x)}{P_F(\tau;\theta)}\nabla_\theta\log P_F(\tau;\theta)}
    \\&-\Var_{(\tau,x)\sim P_F}\bq{\nabla_\theta\log\f{P_F(\tau;\theta)}{R(x)P_B(\tau|x)}+\log\f{P_F(\tau;\theta)}{R(x)P_B(\tau|x)}\nabla_\theta\log P_F(\tau;\theta)}\\
    &=-\EE_{(\tau,x)\sim P_F}\bq{(\nabla_\theta\log P_F(\tau;\theta) \nabla_\theta\log P_F(\tau;\theta)^\top)\pq{1+2\log\f{R(x)P_B(\tau|x)}{P_F(\tau;\theta)}}}.
\end{align*}
If the term in parentheses is always positive (in particular, in the neighbourhood of the solution where $P_F(\tau)=R(x)P_B(\tau|x)$ for all $\tau$), then the difference of variances for all directional derivatives is negative for all $\theta$.

The connection between GFlowNets and variational methods was more thoroughly explored in two papers that appeared while this work was under review \cite{unifying,gfn-hvi}.

\section{Experimental appendix}

\subsection{Hypergrid}
\label{app:grid}

For the GFlowNet policy model, we use an MLP of the same architecture as \cite{bengio2021flow}, with 2 hidden layers of 256 hidden units each. We train all models with a learning rate of 0.001 ($P_F$ and $P_B$ policy model) and 0.1 ($Z_\theta$)  with up to $10^6$ sampled trajectories with a batch size of 16, using the Adam optimizer with all other parameters at their default values.

To reproduce the flow matching and non-GFlowNet baselines, we used the code published by \cite{bengio2021flow} out of the box. For TB and DB, we used a learning rate of $10^{-3}$ for the flow and policy models and a $10^{-1}$ for the initial state flow $\log Z=\log F(s_0)$. (In a search of learning rates over powers of 10, $10^{-3}$ was found to be the largest that does not lead to instability in the form of rapid mode collapse.) All experiments with TB and DB were performed on CPU and take about 2 hours for $10^6$ episodes on a single core, totaling $\sim$10 CPU days for all 24 DB and TB experiment settings with 5 seeds each:
\[
\{{\rm TB},{\rm DB}\} \times \{\text{uniform $P_B$},\text{learned $P_B$}\}\times \{R_0=10^{-1},10^{-2},10^{-3}\}\times\{(H,d)=(8,4),(64,2)\}.
\]

\subsection{Molecule synthesis}
\label{app:mol_synth}
We use the dataset and proxy model provided by \cite{bengio2021flow}. We also train GFlowNet using the same architecture and hyperparameters (except $\beta$ and learning rate) but using the trajectory balance loss presented in this paper, using fixed uniform backward policy $P_B$. The binding scores in the provided dataset were computed with AutoDock~\citep{trott2010autodock}.

To test hyperparameter robustness we trained models using reward exponents $\beta = \{4, 8, 10, 16\}$, and learning rates $\{5\times 10^{-5}, 10^{-4}, 5\times 10^{-4}, 10^{-3}\}$. In contrast to \cite{bengio2021flow}, we used a more exploratory training policy: with probability $0.1$ (instead of the original $0.05$) trajectories are set to stop at some length $k$, which is chosen uniformly between 3 and 8, the minimum and maximum allowed trajectory length respectively.

We observed a runtime improvement of up to $5\times$ for TB relative to FM. There are three factors responsible for this:
\begin{enumerate}[nosep,label=(\arabic*),left=0pt]
\item Most importantly, FM requires as many model evaluations as there are parents of all states in a sampled trajectory, since the model gives the out-flows $F(s\ra s')$ for an input state $s$, while the objective involves a sum over in-flows. On the other hand, TB and DB require just one evaluation of the forward and backward policy models per state.
\item The average trajectory length. If a model learns to terminate early with higher frequency, trajectories are shorter and fewer model evaluations are required.
\item  Hardware and the ratio of CPU and GPU load. Experiments on the molecule domain were run on a Tesla K80 GPU; the computation time benefit of TB appears to be smaller but still present on newer hardware with identical batch size settings. (Meanwhile, experiments on the lightweight grid domain were run on CPU, and the trajectory length was the main factor controlling runtime.)
\end{enumerate}

\subsection{Bit sequence generation}
\label{app:bit_sequence}
\textbf{Generating reference sequences}.
Let $H$ be a set of symbols (short bit sequences of length $b$), for instance $H = \{0110, 1100, 1111, 0000, 0011\}$. Sequences in $S$ are then constructed by randomly combining $m$ symbols from $H$, for instance, $0011110000000011$ where $m=4$. This construction imposes a structure on $R(x)$. In our experiments we set $m=\frac{n}{b}$, $b=8$, $H=\{'00000000', '11111111', '11110000', '00001111', '00111100'\}$. 

\textbf{Generating the test set}.
Since the reward is defined based on the edit distance from the sequences in set $M$, we generate a test set sampled approximately uniformly over the possible values of $R(x)$ as follows: (1) pick a mode $s\in M$, (2) modify $i$ bits randomly $\forall i<n$ and we repeat this for all the modes.

\textbf{Implementation}.
We implement GFlowNets with TB and FM in PyTorch\footnote{Code: \url{https://gist.github.com/MJ10/59bfcc8bce4b5fce9c1c38a81b1105ae}.} for autoregressive generation tasks, along with the A2C baseline. For the MARS (MCMC) baseline we modify the implementation released by~\cite{bengio2021flow}.

\textbf{Hyperparameters}.
We use a Transformer \citep{vaswani2017attention} as the neural network architecture for all the methods. We use 3 hidden layers with hidden dimension $64$ with $8$ attention heads. All methods were trained for $50,000$ iterations, with a minibatch size of $16$. We set the the random action probability to $0.0005$ selected from $\{0.0001, 0.0005, 0.001, 0.01\}$, the reward exponent $\beta$ to $3$ selected from $\{2, 3, 4\}$, and the sampling temperature for $P_F$ to $1$ for the GFlowNets. For trajectory balance we use a learning rate of  $1\times 10^{-4}$ selected from $\{10^{-5}, 10^{-4}, 5\times 10^{-4}, 10^{-3}, 5\times 10 ^ {-3}\}$ for the policy parameters and $1\times 10^{-3}$ for $\log Z$. For flow matching we use a learning rate of $5\times 10^{-4}$ selected from $\{10^{-5}, 10^{-4}, 5\times 10^{-4}, 10^{-3}, 5\times 10 ^ {-3}\}$ with leaf loss coefficient $\lambda_T = 10$. For A2C with entropy regularization we share parameters between the actor and critic networks, and use learning rate of $10^{-4}$ selected from $\{10^{-5}, 10^{-4}, 5\times 10^{-4}, 10^{-3}, 5\times 10 ^ {-3}\}$ with entropy regularization coefficient $10^{-3}$ selected from $\{10^{-4}, 10^{-3}, 10 ^ {-2}\}$. For SAC we use the formulation in~\cite{christodoulou2019soft} with a learning rate of $5\times 10^{-4}$ selected from $\{10^{-5}, 10^{-4}, 5\times 10^{-4}, 10^{-3}, 5\times 10 ^ {-3}\}$ target network update frequency $500$ and $200$ initial random steps. For the MARS baseline we set the learning rate to $5\times 10^{-4}$ selected from $\{10^{-5}, 10^{-4}, 5\times 10^{-4}, 10^{-3}, 5\times 10 ^ {-3}\}$. For all the methods we use the Adam optimizer. Overall, for the Bit sequence generation experiments we used $25$ GPU days.

\subsection{AMP generation}
\label{app:amp_gen}

\textbf{Vocabulary}.
The vocabulary of the 20 amino acids is defined as: \texttt{[`A', `C', `D', `E', `F', `G', `H', `I', `K', `L', `M', `N', `P', `Q', `R', `S', `T', `V', `W', `Y']}

\textbf{Reward Model}.
We use a Transformer-based classifier, with $4$ hidden layers, hidden dimension $64$, and $8$ attention heads. We train it with a minibatch of size $256$, with learning rate $10^{-4}$, with early stopping on the validation set. 

\textbf{Hyperparameters}.
As with the bit sequences, we use a Transformer \cite{vaswani2017attention} as the neural network architecture for all the methods. We use 3 hidden layers with hidden dimension $64$ with $8$ attention heads. All method were trained for $20,000$ iterations, with a mini batch size of $16$. We set the the random action probability to $0.01$ selected from $\{0.0001, 0.0005, 0.001, 0.01\}$, the reward exponent $\beta: R(x)^\beta$ to $3$ selected from $\{2, 3, 4\}$, and the sampling temperature for $P_F$ to $1$ for the GFlowNets. For trajectory balance we use a learning rate of  $5\times 10^{-3}$ selected from $\{10^{-5}, 10^{-4}, 5\times 10^{-4}, 10^{-3}, 5\times 10 ^ {-3}\}$ for the flow parameters and $1\times 10^{-2}$ for $\log Z$. For flow matching we use a learning rate of $5\times 10^{-4}$ selected from $\{10^{-5}, 10^{-4}, 5\times 10^{-4}, 10^{-3}, 5\times 10 ^ {-3}\}$ with leaf loss coefficient $\lambda_T = 25$. For A2C with entropy regularization we share parameters between the actor and critic networks, and use learning rate of $5\times 10^{-4}$ selected from $\{10^{-5}, 10^{-4}, 5\times 10^{-4}, 10^{-3}, 5\times 10 ^ {-3}\}$ with entropy regularization coefficient $10^{-2}$ selected from $\{10^{-4}, 10^{-3}, 10 ^ {-2}\}$. For SAC we use the formulation in~\cite{christodoulou2019soft} with a learning rate of $5\times 10^{-4}$ selected from $\{10^{-5}, 10^{-4}, 5\times 10^{-4}, 10^{-3}, 5\times 10 ^ {-3}\}$ target network update frequency $400$ and $200$ initial random steps. For the MARS baseline we set the learning rate to $5\times 10^{-4}$ selected from $\{10^{-5}, 10^{-4}, 5\times 10^{-4}, 10^{-3}, 5\times 10 ^ {-3}\}$. We run the experiments on 3 seeds and report the mean and standard error over the three runs in Table~\ref{tab:amp_results}. Overall, for the AMP Generation experiments we used $14$ GPU days. 

\end{document}